\def\@makefnmark}
\def\@makefnmark{}\def\useless@macro}
\theoremstyle{plain}
\newtheorem{thm}{Theorem}[section]
\theoremstyle{definition}
\newtheorem{definition}{Definition}[section]
\newcommand{\btheta}{\boldsymbol{\theta}}
\newcommand{\bxi}{\boldsymbol{\xi}}
\newcommand{\bmu}{\boldsymbol{\mu}}
\newcommand{\brho}{\boldsymbol{\rho}}
\newcommand{\boldeta}{\boldsymbol{\eta}}
\newcommand{\bg}{\boldsymbol{g}}
\newcommand{\bx}{\boldsymbol{x}}
\newcommand{\bI}{\boldsymbol{I}}
\newcommand{\bL}{\boldsymbol{L}}
\newcommand{\dataset}{\mathcal{D}}
\newcommand{\loss}{\mathcal{L}}
\newcommand{\diff}{\,\mathrm{d}}
\newcommand{\E}{\mathbb{E}}
\DeclareMathOperator{\im}{im}
\begin{document}

\title{Privacy-preserving Data Sharing on Vertically Partitioned Data}

\author{
  {Razane Tajeddine}$^1$, {Joonas J\"{a}lk\"{o}}$^2$,
		{Samuel Kaski}$^{2,3}$, and {Antti Honkela}$^{2}$ \\
	\vskip1ex
	$^1$ Helsinki Institute for Information Technology HIIT,\\
		Department of Computer Science, University of Helsinki, Finland \\
	$^2$ Helsinki Institute for Information Technology HIIT,\\
	Department of Computer Science, Aalto University, Finland\\
	$^3$ Department of Computer Science, University of Manchester
\thanks{This work was supported by the Academy of Finland (Flagship programme: Finnish Center for Artificial Intelligence, FCAI; and grants 325572, 325573, 343555), the Strategic Research Council at the Academy of Finland (Grant 336032), European Network of AI Excellence Centres ELISE (EU Horizon 2020 grant agreement 951847), as well as UKRI Turing AI World-Leading Researcher Fellowship, EP/W002973/1.\\
The authors wish to thank the Finnish Computing Competence Infrastructure (FCCI) for supporting this project with computational and data storage resources.}}

\maketitle



  \begin{abstract}
   {In this work, we introduce a differentially private method for generating synthetic data from vertically partitioned data, \emph{i.e.}, where data of the same individuals is distributed across multiple data holders or parties. We present a differentially privacy stochastic gradient descent (DP-SGD) algorithm to train a mixture model over such partitioned data using variational inference. We modify a secure multiparty computation (MPC) framework to combine MPC with differential privacy (DP), in order to use differentially private MPC effectively to learn a probabilistic generative model under DP on such vertically partitioned data.
   Assuming the mixture components contain no dependencies across different parties, the objective function can be factorized into a sum of products of the contributions calculated by the parties. Finally, MPC is used to compute the aggregate between the different contributions. Moreover, we rigorously define the privacy guarantees with respect to the different players in the system. To demonstrate the accuracy of our method, we run our algorithm on the Adult dataset from the UCI machine learning repository, where we obtain comparable results to the non-partitioned case.}
  \end{abstract}
  

\section{Introduction}

Differential privacy (DP) \citep{dwork2006calibrating} provides a framework for developing algorithms that can use sensitive personal data while guaranteeing the privacy of the data subjects. One of the most interesting applications of DP is data anonymisation \cite{samarati1998generalizing,sweeney2002k} through creating an anonymised twin of a dataset. The anonymised dataset can then be used in place of the original for new analyses, while maintaining privacy for the original data subjects. In this paper, we investigate the problem of learning mixture models for vertically partitioned data under DP, as shown in Figure~\ref{idea}, where multiple parties hold different features for the same set of individuals. Privacy-preserving learning for vertically partitioned data would enable addressing completely new questions through combining for example health and shopping data that are held by different parties who cannot otherwise combine their data.

Creating synthetic data to ensure anonymity was first proposed by \citet{rubin1993statistical}. After the introduction of DP, several authors have proposed methods for releasing synthetic data protected by DP guarantees \citep[e.g.][]{blum2008learning,dwork2009complexity,
  xiao2010differentially,beimel2010bounds, chen2012differentially,
  hardt2012simple,zhang2014privbayes}.
Recent work has focused on applying generative machine learning models learned under DP using autoencoder neural networks \citep[e.g.][]{acs2018differentially}, generative adversarial networks (GANs) \citep[e.g.][]{yoon2018pategan}, discrete data generation using probabilistic graphical models \cite{zhang2014privbayes,mckenna2019pgm} or more general probabilistic models \citep{jalko2021privacy}.
DP provides a formal guarantee that the true data cannot be inferred from the generated synthetic data, something that would be very difficult to guarantee with other methods.



The data are often distributed across multiple parties. The most typical distributed data case is so-called horizontal partitioning, where different parties hold data for different individuals. This setting is the basis for \emph{federated learning} \citep{kairouz2019advances} which can be used to learn models by combining updates from multiple parties holding data for different individuals. Another setting is the vertical data partitioning, in which the data are divided such that each party holds different features of the same set of individuals. 

While the literature is broad on DP learning on horizontally partitioned data, there has only been very limited prior work on DP learning on vertically partitioned data. \citet{mohammed2013secure} proposed an algorithm for secure two-party DP data release for vertically partitioned data, but their algorithm is limited to discrete data with a small number of possible values because it requires enumerating all possible values of a data element. Recently \citet{wang2020hybrid} proposed a method for DP learning of generalised linear models on vertically partitioned data.

\emph{Secure multiparty computation} (MPC) \citep{yao1982protocols, maurer2006secure} is a cryptographic technique to allow multiple parties to compute a function of their private inputs without revealing their inputs.
Running the learning algorithm under MPC would in theory solve the privacy concern arising from vertical partitioning, but directly using MPC for large problems would be a bottleneck since it is computationally expensive. In this work, we limit the use of MPC to the final gradient aggregation which comprises of a few simple operations, in order to reduce the computational cost. We do this using a slightly modified version of CrypTen \citep{gunning2019crypten}, which adds MPC to PyTorch.

In addition to the challenge of securely combining the contributions of different parties, vertical data partitioning introduces the additional challenge of matching the records between different parties. For simplicity, we assume the records have been matched before running our algorithm. If necessary, the data can be matched first using a \emph{private entity resolution} algorithm \citep{getoor2012entity,sehili2015privacy}.

In this work, we propose an algorithm for DP data sharing by applying \emph{differentially private variational inference (DPVI)} and MPC to train a mixture model on vertically partitioned data, where we can privately infer the parameters of the distribution of the data. 
The algorithm requires gradients of parameters based on the data, which we form such that each party computes only dimensions matching to its data. 
This information is combined using MPC, where the parties combine the gradients and use the typical DP-SGD operations of clipping and perturbing the gradients in every optimisation iteration. To the best of our knowledge, this is the first work to build a DP-SGD algorithm for vertically partitioned data. Particularly, we make the following contributions:

\begin{itemize}
    \item We provide an algorithm for training a mixture model on vertically partitioned data based on DP variational inference. This algorithm is generalizable to any model where the log likelihood does not have any direct dependencies among the parties.
    \item We rigorously define the privacy guarantees with respect to the different players in the system, \emph{i.e.}, the outside analyst and the parties.
    \item  We show that the performance of the algorithm is comparable to the non-partitioned DPVI algorithm by running both algorithms on the Adult dataset.
    \item The algorithm studied in this work requires high precision, which is not directly possible using the MPC framework Crypten. To solve this, we provide a method to modify the Crypten arithmetics for MPC to allow more precision in the fixed point representation. These additions could be easily used whenever a need for more precision arises.
\end{itemize}

The outline of the paper is as follows: In Section \ref{sec:prelim}, we describe the preliminaries needed for this work. Afterwards in Section \ref{sec:setup}, we will descibe the problem setup and theoretical analysis of the privacy protocol. In Section \ref{sec:model}, we describe the algorithm for training on vertically partitioned data. After defining the model, we state our privacy guarantees in Section \ref{sec:theorems}. In Section \ref{sec:exp}, we provide results from experiments on the Adult dataset \cite{Dua:2019} showing that the accuracy of the model is on-par with the non-partitioned case. Then we conclude with discussions and conclusions in Sections \ref{sec:disc} and \ref{sec:conc}.

\begin{figure}
\centering
\includegraphics[scale=0.55]{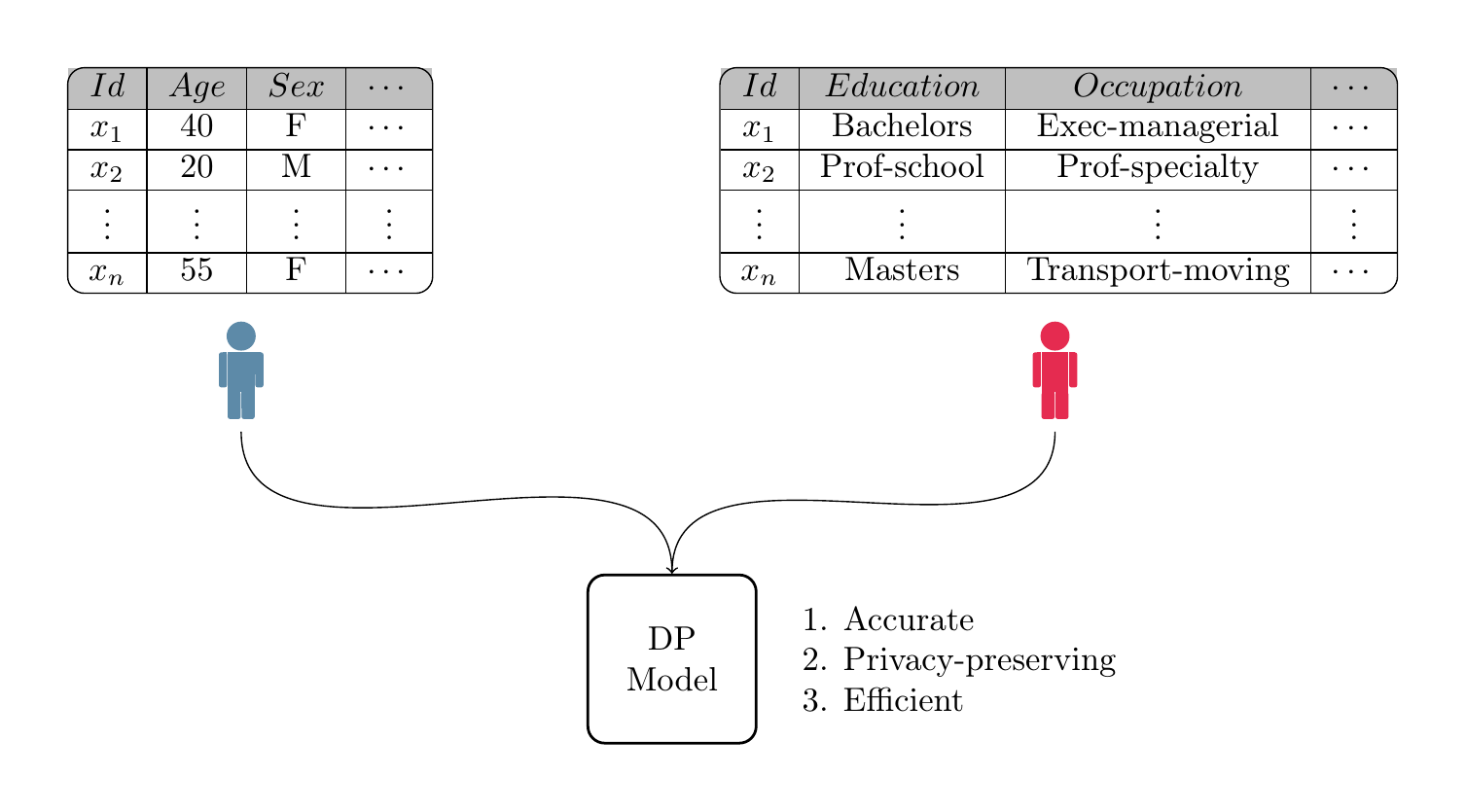}
\caption{Basic Idea of the problem: Multiple parties hold different sets of features for the same set of individuals. We want to train a DP model of the combined set that is accurate, efficient, and privacy-preserving, without the partners sharing their data with each other.}
\label{idea}
\end{figure}




\section{Preliminaries}\label{sec:prelim}

\subsection{Differential Privacy (DP)}

Differential privacy (DP) \citep{dwork2006calibrating, dwork2014algorithmic} is a framework allowing learning significant information about a population, while learning almost nothing about any certain individual. DP provides a strong mathematical guarantee of privacy.

\begin{definition}
[($\epsilon,\delta$)-Differential Privacy] Let $\epsilon, \delta \geq 0$. A randomised algorithm $\mathcal{A}$ is ($\epsilon,\delta$)-differentially private if for all pairs of adjacent datasets $\dataset, \dataset'$, \emph{i.e.}, differing in one data sample, and all measurable subsets $S\subset \im(\mathcal{A})$
\[\Pr(\mathcal{A}(\dataset)\in S) \leq e^\epsilon \Pr(\mathcal{A}(\dataset')\in S)+\delta.\]
\end{definition}

The case $\delta=0$, is known as $\epsilon$-differentially privacy, or \emph{pure} DP. The parameters $\epsilon, \delta$ measure the strength of the privacy, where smaller values correspond to stronger privacy.

\subsubsection{DP stochastic gradient descent}
One of the most general and widely applied DP learning techniques is the \emph{DP stochastic gradient descent} (DP-SGD) \citep{Rajkumar2012,Song2013,Abadi2016}, where the parameters of a differentiable loss function $\loss(\btheta, \dataset) = \sum_i \loss_i(\btheta, \bx_i)$ are learned under DP. Here $\btheta$ are the parameters of the model and $\dataset = \{\bx_i\}_{i=1}^N$ is the dataset. For loss functions which can be represented as a sum over elements of a dataset, DP-SGD computes the gradients w.r.t.~the parameters for each individual, $\bg_i = \nabla_{\btheta} \loss_i(\btheta, \bx_i)$, on a randomly sampled subset of the data. The individual gradients $\bg_i$ are then projected into $B^d(0, C)$, a $d$-dimensional ball with radius $C$, where $d$ is the number of parameters, and finally summed and perturbed with spherical Gaussian noise with covariance matrix $C^2 \sigma^2 \bI_d$, where $\bI_d$ is the $d$-dimensional identity matrix and noise variance $\sigma^2$ controls the magnitude of added noise.

\subsubsection{Privacy accounting}

In order to formalise computing the privacy properties of a complex composition of DP mechanisms such as used in DP-SGD, we define the concept of an accounting oracle that can be used to obtain the privacy bounds.

\begin{definition}[Accounting Oracle]\label{def:oracle}
An \emph{$\epsilon$-accounting oracle} or simply \emph{accounting oracle}, is a function $\mathcal{O}$ that evaluates $(\epsilon, \delta)$-DP privacy bounds for compositions of subsampled mechanisms. Specifically, given $\delta$, sub-sampling ratio $q$, number of iterations $T$ and a base mechanism $\mathcal{A}(\sigma)$, the oracle gives an $\epsilon$, such that a $T$-fold composition of $\mathcal{A}(\sigma)$ with sub-sampling with ratio $q$ is $(\epsilon, \delta)$-DP, \emph{i.e.},
\[\mathcal{O}:(\delta, q, T, \mathcal{A}) \mapsto \epsilon\]
\end{definition}

There are a number of approaches for realising an accounting oracle, leading to increasingly tight bounds. The advanced composition theorem \citep{dwork2014algorithmic} gives simple analytical bounds given $(\epsilon, \delta)$-DP bounds for $\mathcal{A}(\sigma)$. R\'{e}nyi DP \citep{mironov2017} is a common approach for obtaining tighter bounds, but only numerical methods \citep{koskela20b} based on privacy loss distributions \citep{sommer2019privacy} can realise an arbitrarily accurate oracle.



\subsection{Variational Inference}

In Bayesian statistics our goal is to learn the posterior distribution of parameters of a probabilistic model. For a probabilistic model $p(\dataset, \btheta)$, where $\dataset$ denotes the data and $\btheta$ the model parameters, the posterior distribution is given as
\begin{align}
    p(\btheta \mid \dataset) = \frac{p(\dataset \mid \btheta) p(\btheta)}{p(\dataset)}.
\end{align}

The exact posterior distribution is often intractable and we need to resort to approximate Bayesian inference. Variational inference (VI) approximates the intractable posterior distribution with a tractable \emph{variational posterior} $q_{\pmb{\xi}}(\pmb{\theta})$~\citep{jordan1999introduction}.

We learn the variational posterior by minimizing the Kullback-Leibler divergence between the variational posterior $q_{\pmb{\xi}}(\pmb{\theta})$ and the true posterior $p(\btheta \mid \dataset)$:

\begin{equation}\label{eq:KL}
    KL(q_{\pmb{\xi}}(\btheta)||p(\pmb{\theta}\mid\dataset))= \E_{q_{\pmb{\xi}}(\btheta)} \left[ \log \frac{q_{\pmb{\xi}}(\btheta)}{p(\pmb{\theta}\mid \dataset))} \right]
\end{equation}

Now, this expressions again involves the exact posterior which is intractable. Therefore, instead of directly optimizing the KL divergence, VI fits the approximate posterior $q_{\pmb{\xi}}(\pmb{\theta})$ by maximizing the \emph{evidence lower bound} (ELBO) w.r.t.~the variational parameters $\bxi$. The ELBO is given as
\begin{align*}
    \mathcal{L}&=\int q_{\pmb{\xi}}(\pmb{\theta})\log{\frac{p(\mathcal{D},\pmb{\theta})}{q_{\pmb{\xi}}(\pmb{\theta})}} \diff\btheta \\
    &=-KL(q_{\pmb{\xi}}(\btheta)||p(\pmb{\theta})) + 
    \sum_{i=1}^{N} \E_{q_{\pmb{\xi}}(\btheta)} \left[ \log(p(\bx_i|\pmb{\theta})) \right] \\
    &=-KL(q_{\pmb{\xi}}(\btheta)||p(\pmb{\theta} \mid \dataset)) + \log p(\dataset),
\end{align*}
where $\mathcal{D}=\{\bx_1, \cdots, \bx_N\}$ is the set of observations, $\E_q$ denotes the expectation over distribution $q$ and $KL(q || p)$ the Kullback-Leibler divergence between $q$ and $p$ defined in Eq.~\eqref{eq:KL}. 

The ELBO lower bounds the marginal likelihood $\log p(\mathcal{D})$, \emph{i.e.}, it is the evidence of the marginal likelihood, and thus the name. It is straightforward to see that setting $q_{\bxi}$ to the true posterior maximizes ELBO and therefore minimizes the Kullback-divergence.

\subsubsection{Doubly stochastic variational inference}\label{sec:dsvi}

In certain models, a suitable choice of variational posterior leads to analytically tractable variational parameters. This however is not the case in general.
Doubly stochastic variational inference (DSVI) \cite{titsias2014doubly} is a framework based on stochastic gradient optimisation of the ELBO. It operates on variational approximations that lead into a differentiable form of ELBO. For such approximation, the variational parameters $\bxi$ can be learned using a stochastic gradient optimiser.
For instance, samples $\btheta$ from a Gaussian variational distribution $q_{(\boldsymbol\mu, \bL)}(\btheta) = \mathcal{N}(\pmb{\theta};\pmb{\mu}, \bL \bL^T)$ can be \emph{reparametised} as 
$\btheta := \btheta(\boldsymbol\mu, \bL; \boldeta) = \boldsymbol\mu + \bL \boldeta$, where $\boldeta \sim \mathcal{N(\mathbf{0}, \mathbf{I})}$.

Denoting the parameters of the reparametrisation $\bxi := (\boldsymbol\mu, \bL)$, the sample $\btheta$ can be expressed as a stochastic function $\btheta(\bxi; \boldeta)$ which is now differentiable w.r.t.~the parameters $\boldsymbol\mu, \bL$, and thus the expression inside the expectation in ELBO is also differentiable w.r.t.~the variational parameters. This means that we can optimise the variational parameters with gradients such as 
\begin{align}
    \nabla_{\bxi} \mathcal{L} &= \nabla_{\xi} \E_{\phi(\boldeta)}
    \left[ \log \frac{p(\mathcal{D}, \btheta)}{q_{\bxi}(\btheta)} \right] \\
    &= \E_{\phi(\boldeta)}
    \left[ 
    \nabla_{\bxi} \log \frac{p(\mathcal{D},\btheta(\bxi ; \boldeta))}{q_{\bxi}(\btheta(\bxi;\boldeta))}
    \right] \\
    &= \E_{\phi(\boldeta)}
    \left[ 
    \nabla_{\bxi} \log \frac{p(\mathcal{D},\btheta(\bxi ; \boldeta)) |\det(J(\btheta))|}{\phi(\boldeta)}
    \right] \\
    &= \E_{\phi(\boldeta)}
    \left[ 
    \nabla_{\bxi}  \left( \log p(\mathcal{D},\btheta(\bxi ; \boldeta))  + \log |\det(J(\btheta))| \right)
    \right] \\
    &= \E_{\phi(\boldeta)}
    \left[ 
    \nabla_{\bxi}  \left( \log p(\mathcal{D},\btheta(\bxi ; \boldeta))  + \log |\det(\bL)| \right)
    \right],
\end{align}
where $\det(J(\btheta)) = \det(\bL)$ denotes the determinant of the Jacobian of $\theta(\bxi ; \boldeta)$ w.r.t.~the $\boldeta$, and $\phi(\boldeta)$ is the density of $\boldeta$.

The expectations with respect to $\phi(\boldeta)$ are evaluated using Monte Carlo integration. In many cases, a single Monte Carlo sample is sufficiently accurate for SGD. Therefore, we can drop the expectation.



To extend the approach for model parameters that are constrained, an Automatic Differentiation Variational Inference (ADVI) framework was proposed by \citet{kucukelbir2017automatic}. In this framework the constrained variables are transformed into $\mathbb{R}^d$ with a bijective map. Next the unconstrained results are given a Gaussian variational distribution, and finally the ELBO is written using the change of variables rule for the constrained variables.


\subsubsection{DP variational inference}

Differentially private variational inference (DPVI) is a technique, first proposed in \cite{JalkoHD17}, that learns the variational posteriors under differential privacy. The method is based on using DP-SGD for optimisation of the variational parameters.

\subsection{Cryptographic Techniques}

Cryptography is used in order for two or more parties to compute a function on their collective data without sharing their data with one another.

This paper will also discuss using cryptographic techniques to achieve DP. Discrete Gaussian noise will be added for the DP modelling, with privacy guarantees given in \cite{kairouz2021distributed}, which is added under MPC.

\subsubsection*{Secure Multi-party Computation}

Secure multi-party computation (MPC) \citep{yao1982protocols, maurer2006secure} is a cryptographic technique that allows parties to compute a function of their combined data without revealing the data to one another. One of the basic tools of MPC is the Shamir secret sharing scheme \citep{shamir1979share}. A $t$-out-of-$n$ secret sharing scheme assumes a dealer who wants to share a secret with $n$ parties, where any $t$ players can reconstruct the secret, however, but any subset of $t-1$ or less parties cannot reconstruct it. 

There are two general MPC protocols, \emph{Boolean MPC} and \emph{arithmetic MPC}. Boolean MPC is used to evaluate a Boolean circuit. This is based on Yao's garbled circuits \citep{yao1982protocols}. On the other hand, arithmetic MPC is used to evaluate arithmetic functions and is based on additive homomorphic encryption. Arithmetic MPC supports basic mathematical operations, such as addition and multiplication. Also, using MPC, one can approximate other arithmetic functions. However, the limitation of arithmetic MPC is that it cannot deal with floating point numbers, and therefore, a fixed point representation of real values must be used. There has been some work on MPC with floating point arithmetic such as the recent work by \cite{guo2020secure}, where approximations of functions, such as division and square roots, are derived on real numbers.

There are some general purpose software packages for MPC. In this paper, we will implement the method on CrypTen \citep{gunning2019crypten} framework. CrypTen implements arithmetic and Boolean MPC for PyTorch. In our work, we will use the arithmetic MPC implementation.

\subsection{Distributed Gaussian Noise addition}

\citet{canonne2020discrete} showed that adding discrete Gaussian noise $\mathcal{N}_{\mathbb{Z}}(0,\sigma^2)$ provides almost identical DP guarantees as the continuous Gaussian. A discrete distributed Gaussian mechanism for noise addition was introduced by \citet{kairouz2021distributed}. It is shown that adding discrete Gaussian noise from each party can be summed up to get DP guarantees which match the non-distributed Gaussian noise addition DP. In Theorem 11 by \citet{kairouz2021distributed}, it is stated that although the convolution of two discrete Gaussians is not a discrete Gaussian, it is very close to a discrete Gaussian. This theorem is also extended to the sum of more than two Gaussians. 

\section{Privacy model}\label{sec:setup}

\subsection{Problem Setup}

For vertically partitioned data, we assume the following players in the model:

\begin{itemize}
    \item $P$ parties, \emph{i.e.}, data holders, where each party is holding a set of features for the same set of individuals.
    \item Central aggregator who is responsible for running the learning algorithm.
    \item An outside analyst who observes the data once it is published.
    \item \emph{Possible} trusted third party, \emph{e.g.} a trusted execution environment (TEE), which will add the noise.
\end{itemize}

The parties want to collaborate in order to learn a mixture model over their collective data. However, the information they own is private and cannot be shared among the parties. We will consider two options: the first is to assume each party adds a portion of the noise to the model, while the second is to assume that we have a trusted third party who will add the noise to the model before publishing the data. 

The parties will communicate with one another under MPC, such that they only share encrypted information. Lastly, the model is symmetric, \emph{i.e.}, the privacy constraints on all parties are the same.

For notational convenience, we assume that the individuals are in the same order in all the parties' datasets. We further assume that the parties are honest-but-curious, \emph{i.e.}, they would always answer honestly but would try to know more about the data from the other parties.

\subsubsection{Noise addition}

If we assume having a trusted third party (TEE), then this party will add a $\mathcal{N}_{\mathbb{Z}}(0,\sigma^2)$ discrete Gaussian noise needed in the DP mechanism. 

Otherwise, if we are assuming that there is no trusted third party in the model, the parties need to collaborate to add the DP noise. For that, each party will send its share of the discrete Gaussian noise in an encrypted form. The noise from the parties will have zero mean and $1/P$ the required noise variance, i.e. the noise of each party is sampled from $\mathcal{N}_{\mathbb{Z}}(0,\sigma^2/P)$, where $P$ is the number of parties. This noise is then summed together under MPC resulting in a sample from $\mathcal{N}_{\mathbb{Z}/P}(0,\sigma^2) \approx \mathcal{N}_{\mathbb{Z}}(0,\sigma^2)$ and added to the gradients before the decryption. The notation $\mathcal{N}_{\mathbb{Z}/P}$ is used here for a distributed discrete Gaussian formed of $P$ parts to highlight the fact that this is not identical to the plain discrete Gaussian.

\subsection{Vertically Partitioned Differential Privacy}

In the vertically partitioned data setting, each party holds a subset of the features. Therefore, with respect to each party, the data the other parties hold are secret, while the data it holds is known. However, the parties share the same individuals, and therefore contrary to the standard DP setting, the involvement of a single individual cannot be considered a secret between the parties. Thus, the privacy model of DP cannot we used when analyzing the privacy guarantees between the parties. Therefore, we need to introduce a new privacy notion. We start by defining the adjancency in the vertically partitioned setting.

\begin{definition}[Vertically Partitioned Adjacency, VP-adjacency] 
Let $\dataset$ and $\dataset'$ be datasets with $M$ features. We assume that the features are partitioned over $P$ parties $1,\cdots,P$, where party $i$ holds $m_i$ features and $\sum_i m_i=M$. Moreover, let $x_j^i\in\dataset$ be the portion of sample $x_j$ held by party $i$ and ${x_j^i}'\in\dataset'$ be the portion of sample $x_j'$ held by party $i$. We say $\dataset$ and $\dataset'$ are VP-adjacent if $\dataset = \dataset' \setminus \{x_k\} \cup \{x_k'\}$ for some $k\in[N]$, such that $x_k^p={x_k^p}'$ for some party $p\in [P]$ and $x_k^{i}\neq{x_k^{i}}'$ for all other parties $i\neq p$ such that $i\in [P]$.
\end{definition}

Now using the VP-adjacency, we define the \emph{vertically partitioned differential privacy} (VPDP):

\begin{definition}[Vertically Partitioned Differential Privacy, VPDP] 
Let $\epsilon, \delta \geq 0$. A randomised algorithm $\mathcal{A}$ is ($\epsilon,\delta$)-vertically partitioned differentially private (($\epsilon,\delta$)-VPDP) if for all pairs of VP-adjacent datasets $\dataset, \dataset'$ and all measurable subsets $S\subset \im(\mathcal{A})$
\[\Pr(\mathcal{A}(\dataset)\in S) \leq e^\epsilon \Pr(\mathcal{A}(\dataset')\in S)+\delta.\] If $\delta=0$, this is $\epsilon$-VPDP.
\end{definition}

This definition of VPDP is analogous to the notion of label differential privacy \cite{ghazi2021deep}, as in both privacy notions, a portion and not all of a sample is considered sensitive information.

\begin{thm}\label{thm:vpdp}
Any ($\epsilon,\delta$)-DP algorithm $\mathcal{A}$ 
is ($\epsilon,\delta$)-VPDP.
\end{thm}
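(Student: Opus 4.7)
The plan is to observe that VP-adjacency is strictly stronger than the standard notion of adjacency used for DP: if $\dataset$ and $\dataset'$ are VP-adjacent, then by definition they differ only in a single record $x_k$ (replaced by $x_k'$), which is exactly the condition for the usual DP adjacency relation. The extra requirement in VP-adjacency, namely that $x_k^p = {x_k^p}'$ for some party $p$, is simply an additional restriction on top of ordinary adjacency. So the set of VP-adjacent pairs is a subset of the set of ordinarily adjacent pairs.

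From this, the theorem follows almost immediately. First I would state the inclusion formally: let $R_{\mathrm{DP}}$ be the relation of ordinary adjacency and $R_{\mathrm{VP}}$ the VP-adjacency relation; then $R_{\mathrm{VP}} \subseteq R_{\mathrm{DP}}$. Next, unpack the hypothesis: since $\mathcal{A}$ is $(\epsilon,\delta)$-DP, the inequality
\[
\Pr(\mathcal{A}(\dataset) \in S) \leq e^{\epsilon}\Pr(\mathcal{A}(\dataset') \in S) + \delta
\]
holds for every $(\dataset,\dataset') \in R_{\mathrm{DP}}$ and every measurable $S \subseteq \im(\mathcal{A})$. Restricting to $(\dataset,\dataset') \in R_{\mathrm{VP}}$ then yields exactly the definition of $(\epsilon,\delta)$-VPDP.

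There is essentially no technical obstacle here; the only thing to be careful about is to check that the definition of VP-adjacency really does imply ordinary adjacency. Looking at the definition, $\dataset = \dataset' \setminus \{x_k\} \cup \{x_k'\}$ means the two datasets differ in exactly one sample, which matches the standard ``differ in one data sample'' adjacency used in the definition of DP. The extra conditions on which features coincide across parties only further constrain the pair but do not create any pair outside $R_{\mathrm{DP}}$. Hence the implication is just a containment-of-quantifier argument and the proof is essentially one line once the inclusion $R_{\mathrm{VP}} \subseteq R_{\mathrm{DP}}$ is recorded.
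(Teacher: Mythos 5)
Your proposal is correct and follows essentially the same route as the paper's own proof: both rest on the observation that the VP-adjacency relation is contained in the ordinary adjacency relation (the paper phrases this as $A_{vp} \subseteq A$ for the sets of neighbours of a fixed dataset), so the DP inequality restricts to all VP-adjacent pairs. Your write-up is if anything slightly more careful, since you explicitly verify that the replacement condition $\dataset = \dataset' \setminus \{x_k\} \cup \{x_k'\}$ matches the ``differ in one data sample'' adjacency used in the DP definition.
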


\begin{proof}
Let $\mathcal{D}$ be a dataset and let $A$ be the set of all adjacent datasets to $\mathcal{D}$ and $A_{vp}$ be the set of VP-adjacent datasets to $\mathcal{D}$. We can see that $A_{vp} \subseteq A$.

Therefore, if the output of an algorithm $\mathcal{A}$ is ($\epsilon,\delta$)-DP, \emph{i.e.}, is ($\epsilon,\delta$) indistinguishable for datasets $\mathcal{D}, \mathcal{D}'$ such that $\mathcal{D}'\subset A$, then this algorithm is also indistinguishable for datasets $\mathcal{D}, \mathcal{D}''$ such that $\mathcal{D}''\subset A_{vp}$. Therefore $\mathcal{A}$ is ($\epsilon,\delta$)-VPDP as well.

\end{proof}

For a practical implementation of vertically partitioned DP where the parties have additional information of the internal states of the algorithm, we must be careful in applying the above theorem. The most common example of such relevant additional information is the knowledge of data elements contributing to a particular minibatch in mechanisms using sub-sampling and privacy amplification by sub-sampling, as commonly used in DP-SGD. The commonly used privacy amplification by sub-sampling requires that the contributing elements are kept secret, and hence that amplification would not apply with respect to a party who would know these.

\section{Privacy Preserving Mixture Models over Vertically Partitioned Data}\label{sec:model}

In this section, we will discuss in detail the algorithms used to train a model on vertically partitioned data, focusing on mixture models. We will first describe the problem setting. Then, we show the techniques used for the non-partitioned case. After this, we will describe the changes needed for training the model with partitioned data. We will use MPC to combine the contribution of the data from all parties to train the model. However, we would want to minimize the use of MPC since it is computationally expensive. Therefore, we need to maximize the calculations the parties perform locally and use MPC only for combining the results.

\subsection{DPVI gradient computation}


First, we remind that, in the DPVI method, we need to optimise with respect to $\bmu$ and $\pmb L$, such that $${\pmb\theta}={\pmb L}\boldeta+\bmu.$$ Now, we look at the ELBO function:

\begin{align}
    \mathcal{L}&=\E_{q_{\bxi}(\btheta)} \left[ \log \frac{p(\dataset, \btheta)}{q_{\bxi}(\pmb\theta)}\right]\nonumber\\
    &=\E_{\phi(\boldeta)} \left[\log p(\dataset, \bL\boldeta+\bmu)\right]\nonumber\\
    & \quad\quad +\log|\det(\bL)|+ \mathbf{E}_{\phi(\boldeta)} \left[ -\log \phi(\eta)\right],\label{ELBO}
\end{align}
where $\boldeta\sim\mathcal{N}(0,\mathbf{I})$ and $\phi(\boldeta)$ is its density.

We can then find the gradients of the ELBO with respect to $\bmu$ and $\bL$:
\begin{align}
    \nabla_{\pmb{\mu}}\mathcal{L}&=\E_{\phi(\boldeta)} \left[ \nabla_{\pmb{\theta}} \log p(\dataset, \btheta)\right]\\
    &=\E_{\phi(\boldeta)} \left[ \nabla_{\pmb{\theta}} \log p(\mathcal{D}|{\pmb{\theta}})\right]+\E_{\phi(\boldeta)} \left[ \nabla_{\pmb{\theta}}p({\pmb{\theta}})\right]\label{gradmu}\\\nonumber\\
    \nabla_{\pmb{L}}\mathcal{L}&=\E_{\phi(\boldeta)} \left[ \nabla_{\pmb{\theta}} \log p({\pmb{\theta}},\mathcal{D})\cdot \boldeta\right] + \Delta_{\bL}\\
    &=\E_{\phi(\boldeta)} \left[ \nabla_{\pmb{\theta}} \log p(\mathcal{D}|{\pmb{\theta}})\cdot \boldeta\right]+\E_{\phi(\boldeta)} \left[ \nabla_{\pmb{\theta}}p({\pmb{\theta}})\cdot \boldeta\right] + \Delta_{\bL},\label{gradsig}
\end{align}
where $\Delta_{\bL} = \nabla_{\bL} \log |\det(\bL)| = \mathrm{diag}(1/l_{ii})$ is the gradient of the log-determinant.

We notice that the gradient of the prior $\nabla p(\pmb\theta)$ is not private and therefore, can be calculated publicly. However, the gradient $\nabla_{\pmb{\theta}} \log p({D|\pmb{\theta}})$ in Eqs.~\eqref{gradmu} and \eqref{gradsig} needs to be found privately.

\subsection{Mixture model}

In this work, our goal is to use DPVI to learn a generative model for differentially private data sharing. Following \citet{jalko2021privacy}, we use a mixture model,
\begin{align}
    \log p(\dataset | \btheta) &= \sum_{n=1}^N \log p(\bx_n | \btheta) \\
    &= \sum_{n=1}^N \log \sum_{k=1}^K \pi_k \mathcal{F}(\bx_n | \brho_k),
    \label{eq:mixture}
\end{align}
fitted to the private data as a generative model to create a synthetic dataset.
Here the $\pi_k, k=1,\dots,K$ are the mixing weights of the $K$ components and the $\mathcal{F}(\bx_n | \brho_k)$ are the probability densities for the mixture components parametrised by $\brho_k, k=1,\dots,K$.

In order to minimise the need for communication between different parties, we assume that the density of each mixture component $\mathcal{F}(\bx_n | \brho_k)$ factorises over the $P$ different parties as
\begin{equation}
    \mathcal{F}(\bx_n | \brho_k) = \prod_{p=1}^P \mathcal{F}_p (\bx_n^p | \brho_k^p), \label{eq:densities}
\end{equation}
where $\bx_n^p$ denotes the part of the data vector $\bx_n$ held by party $p, p=1,\dots,P$ and $\brho_k^p$ denotes the parameters of $\mathcal{F}_p$.

\subsection{DPVI gradient of the mixture model}

To find the gradient $\nabla_{\pmb{\theta}} \log p(\dataset|{\pmb{\theta}})$ of the mixture model in Eq.~(\ref{eq:mixture}), we do the following: Let us assume the full dataset contains $N$ data points and $M$ features and there are $P$ parties $p=1,\cdots,P$, party $p$ having a subset of the features for all data points. We set the number of components to $K$. Now combining \eqref{eq:mixture} and \eqref{eq:densities} yields

\begin{align} \nabla_{\pmb{\theta}} \log p(\mathcal{D}|&{\pmb{\theta}})=\nabla_{\pmb{\theta}}\sum_{n=1}^N \log\left[\sum_{k=1}^K\pi_k \mathcal{F}(\bx_n|\pmb{\rho}_k)\right]\\
&=\nabla_{\pmb{\theta}}\sum_{n=1}^N \log\left[\sum_{k=1}^K\pi_k \prod_{p=1}^{P}\mathcal{F}_p(\bx^p_n|\brho_k^p)\right].\label{eq:grad}
\end{align}


From Eq.~\eqref{eq:grad}, we can see that the variables to be optimised are $\pmb\pi$ and $\pmb\rho$, therefore, we need to find the gradients with respect to those two sets of variables. Assuming $\mathcal{F}(\bx_n|\pmb{\rho}_k)$ is not dependent on $\pmb\pi$, the derivatives will be as follows.

\begin{align}
    \frac{\partial{\log p(\mathcal{D}|\btheta)}}{\partial \pi_k} &= \sum_{n=1}^N \frac{\prod_{p}\mathcal{F}_p(\bx_n^p|{\brho}_k^p)} {\sum_{k=1}^K \pi_k\prod_{p=1}^P \mathcal{F}_p(\bx_n^p|{\brho}_k^p)} \label{eq:pi}\\
    \frac{\partial{\log p(\mathcal{D}|\btheta)}}{\partial \brho_k^p} &= \sum_{n=1}^N \frac{\pi_k\frac{\partial\mathcal{F}_p({\bx}_n^p|\brho_k^p)}{\partial \brho_k^p}\cdot \prod_{p'\neq p} \mathcal{F}_{p'}({\bx}_n^{p'}|\brho^{p'}_k)}{\sum_{k=1}^K \pi_k \prod_{p=1}^P \mathcal{F}_p(\bx_n^p|\brho_k^p)}.\label{eq:rho}
\end{align}

We see from Eqs.~\eqref{eq:pi} and \eqref{eq:rho} 
that the parties can locally calculate their contributions for each individual sample and then collaborate using MPC to aggregate the information to form the final per-example gradient. Finally, DP noise is added to the sum of clipped per-example gradients, to maintain the DP guarantees. This makes the MPC problem significantly more difficult than the simple sum commonly used in secure aggregation in federated learning \citep{kairouz2019advances}.


\subsection{Algorithm for DPVI-VPD for 2 parties}

\begin{algorithm} 
\SetKwData{Left}{left}\SetKwData{This}{this}\SetKwData{Up}{up}
\SetKwFunction{Union}{Union}\SetKwFunction{FindCompress}{FindCompress}
\SetKwInOut{Input}{Input}
\SetKwInOut{Output}{Output}
\SetAlgoLined
\Input{Size of the dataset $N$, Sampling ratio $q$, number of iterations $T$, clipping threshold $C$, and values at iteration $t$ ${\pmb \pi}^{(t)}, {\pmb \rho}^{(t)}$}
\BlankLine
 \textbf{for} t in [T] \textbf{do}\;
 \quad Pick random subset $n$ of $N$ with sampling probability $q$\;
 \quad Send the subset $n$ along with  ${\pmb \pi}^{(t)}$ and ${\pmb \rho}^{(t)}$ to the parties\;
 \quad Each party $p$ computes its share using Eqs.~\eqref{matshare}, \eqref{dmatshre}\;
 \quad The parties use MPC to calculate the gradient ${g}_t(\bx_n)$, clip, sum and perturb it to get $\Tilde{g}_t$ then reveal the result as shown in Eqs.~\eqref{eq:gtn}-\eqref{eq:perturb}\;
 \quad Add the gradient of the log prior to $\Tilde{g}_t$ as in Eqs.~\eqref{gradmu} and \eqref{gradsig}\;
 \quad Update the values of $\pmb \pi$ and $\pmb\rho$
 \caption{DPVI for mixture models on Vertically Partitioned Data}\label{algo:main}
\end{algorithm}

In this section, we consider the case where $P=2$, \emph{i.e.}, there are two parties. 

Each party $p\in \{1,2\}$ calculates its own
$\log(\mathcal{F}_p(\bx_n^p|\brho^p_k))$ locally. Also, the party needs to compute the exponential of this for the MPC. Denote this with $\text{mat}_{n,k}^{(p)}$
\begin{align}
    \text{mat}_{n,k}^{(p)} &:= \mathcal{F}_p(\bx_n^p|\brho^p_k).
    \label{matshare}
\end{align}

For the gradient update, each party needs to calculate the gradient with respect to $\brho_k^p$, thus getting the derivative \[\frac{\partial\log(\mathcal{F}_p(\bx_n^p|\brho_k^p))}{\partial \brho_{k}^p}=\frac{\frac{\partial(\mathcal{F}_p(\bx_n^p|\brho_{k}^p))}{\partial \brho_{k}^p}}{\mathcal{F}_p(\bx_n^p|\brho_{k}^p)}\] with respect to every variable $\brho_k^p$. We can denote this by 
\begin{align}
    \text{dmat}_{n,k}^{(p)}&:=\frac{\frac{\partial(\mathcal{F}_p(d_n^p|\brho_{k}^p))}{\partial \brho_{k}^p}}{\mathcal{F}_p(d_n^p|\brho_{k}^p)}.
    \label{dmatshre}
\end{align} 

If the DP perturbation noise $\zeta$ is added by the parties, party $p$ will generate its portion of the noise $\zeta^p=\mathcal{N}_{\mathbb{Z}}(0,C^2\sigma^2/P\mathbf{I})$. Then encrypt it and send the encrypted noise to the MPC.

After that, each party $p$ will encrypt the $\text{mat}_{n,k}^{(p)}$ and $\text{dmat}_{n,k}^{(p)}$. Using MPC, the parties collaborate to calculate the sum of their noise contribution $\pmb{\zeta}=\sum_p \pmb{\zeta}^p$ and aggregate the $\text{mat}_{n,k}^{(p)}$ and $\text{dmat}_{n,k}^{(p)}$ to obtain

\begin{align}
    \text{mat}_{n,k}&:=\text{mat}_{n,k}^{(1)}\cdot\text{mat}_{n,k}^{(2)}=\prod_{p=1}^P \mathcal{F}_p(\bx_n^p|\brho_{k}^p),\label{eq:mat}\\
    \text{dmat}_{n,k}^p&:= \pi_k \cdot \text{dmat}_{n,k}^{(p)}\cdot \text{mat}_{n,k}^{(3-p)}\nonumber\\
    &=\pi_k\frac{\partial(\mathcal{F}_p(\bx_n^p|\brho_{k}^p))}{\partial \brho_{k}^p}\cdot\prod_{p' \neq p}\mathcal{F}_{p'}(\bx_n^{p'}|\brho_{k}^{p'}).\label{eq:dmat}
\end{align}

Now, the denominator in Eqs. \eqref{eq:pi} and \eqref{eq:rho} is found using MPC as the sum \begin{equation}
    \text{den}_n:=\sum_k \pi_k \text{mat}_{n,k}.\label{eq:den}
\end{equation}

Using Eqs.~\eqref{eq:mat}, \eqref{eq:dmat} and \eqref{eq:den}, the parties calculate the individual gradients, \emph{i.e.}, the summands in Eqs. \eqref{eq:pi} and \eqref{eq:rho} under MPC respectively as follows:

\begin{align}
    GP(n,k)&=
    \frac{\text{mat}_{n,k}}{\text{den}_n}\label{eq:gp}\\
    GR(n,k,p)&=
    \frac{\text{dmat}_{n,k}^p}{\text{den}_n}.\label{eq:gr}
\end{align}

These per-example gradients then need to be clipped and perturbed before summing them over the individuals as in Eqs. \eqref{eq:pi} and \eqref{eq:rho}. To do that, we let 

\begin{align*}
&\mathbf{GP}(n):=\left[GP(n,1),\cdots, GP(n,K) \right],\\ &\mathbf{GR}(n,p):=\left[GR(n,1,p),\cdots, GR(n,K,p) \right],
\end{align*}

and then define the vector $g_t(\bx_n)$ as a combined vector of all the gradients individual $\bx_n$ contributes:

\begin{align}
    g_t(\bx_n)&= \left[\mathbf{GP}(n), \mathbf{GR}(n,p) \right] \label{eq:gtn}.
\end{align}

The gradients are then clipped then summed, and perturbed:

\begin{align}
    g_t^*(\bx_n)&=\frac{g_t(\bx_n)}{\max\left(1,\frac{||g_t(\bx_n)||_2}{C}\right)}\label{clip}\\
    g_t^* &= \sum_n g_t^*(\bx_n)\label{eq:sum}\\
    \Tilde{g}_t&=g_t^* + \pmb{\zeta}, \label{eq:perturb}
\end{align}
with $\pmb{\zeta} \sim \mathcal{N}_{\mathbb{Z}}(0,C^2\sigma^2\mathbf{I})$ if a trusted third party is adding the noise or $\pmb{\zeta}=\sum_p \pmb{\zeta}^p \sim \mathcal{N}_{\mathbb{Z}/P}(0,C^2\sigma^2\mathbf{I})$ if the parties are collaboratively adding the noise.

Afterwards, the gradients can be decrypted and released. Finally, we add the gradients arising from the log prior and entropy in Eqs.~\eqref{gradmu} and \eqref{gradsig} to the $\Tilde{g}_t$ to complete the gradient calculation for the update.

\subsection{MPC Challenges}

\subsubsection*{Fixed Point Precision}

We use Crypten \cite{gunning2019crypten} for multiparty computation. However, the default fixed point precision ($16$ bit precision) in Crypten is low for our purposes. The framework allows users to change the precision, which unfortunately leads to certain arithmetic and wrap around issues. Therefore, we implemented our own fixed point arithmetic which builds on the existing arithmetic available in Crypten and is based on similar techniques as in \cite{cheon2017homomorphic}. We reduce the fixed point precision to 0, and use our own implementation of multiplication, division, and exponential, using the operations built in Crypten as a base. We multiply each number by a scale $=2^{32}$ and truncate the decimal part to get only the integer part. Then, we use the following algorithms for the different calculations:

\subsubsection*{Multiplication}

Input the two encrypted values into a function mult(a,b) as in Algorithm~\ref{multalg} which divides each of the two numbers into an integer part and fraction part, and uses those parts to calculate the product.

\begin{algorithm}\caption{\texttt{mult(a,b)}}\label{multalg}
\SetKwData{Left}{left}\SetKwData{This}{this}\SetKwData{Up}{up}
\SetKwFunction{Union}{Union}\SetKwFunction{FindCompress}{FindCompress}
\SetKwInOut{Input}{Input}
\SetKwInOut{Output}{Output}
\SetAlgoLined
\Input{Two cryptensors $a$ and $b$, and $scale=2^{32}$}
\BlankLine
 Find the signs $s_a=\text{sign}(a)$ and $s_b=\text{sign}(b)$\;
 Find $a_{abs}=a*s_a$ and $b_{abs}=b*s_b$\;
 Find $a_{int}=\text{truncate}(a_{abs}/scale)$ and $b_{int}=\text{truncate}(b_{abs}/scale)$\;
 Find $a_{frac}=a_{abs}-a_{int}*scale$ and $b_{frac}=b_{abs}-b_{int}*scale$\;
 Then find the output $c = s_a*s_b*(a_{int}*b_{int}*scale+a_{int}*b_{frac}+a_{frac}*b_{int}+a_{frac}*b_{frac}/scale)$
\end{algorithm}

\subsubsection*{Division and Exponential}

For the division and exponential, we basically use the same techniques used in Crypten, but we edit them slightly to use our updated multiplication and scaling the numbers up by $2^{32}$.

For division, we input the two encrypted values into a function div(a,b) as in Algorithm~\ref{divalg} which uses the Newton Raphson method, and calls the multiplication function in Algorithm~\ref{multalg}. The number of iterations $T$ in this algorithm affects the precision of the algorithm and is set by the user.

\begin{algorithm}\caption{\texttt{div(a,b)}}\label{divalg}
\SetKwData{Left}{left}\SetKwData{This}{this}\SetKwData{Up}{up}
\SetKwFunction{Union}{Union}\SetKwFunction{FindCompress}{FindCompress}
\SetKwInOut{Input}{Input}
\SetKwInOut{Output}{Output}
\SetAlgoLined
\Input{Two cryptensors $a$ and $b$, $scale=2^{32}$, and number of iterations $T$}
\BlankLine
 Set an initial value for $q=3e^{scale-2*b}+(0.003*scale)\approx 1/b$\;
 For $i$ in range($T$)\;
 \quad $q=mult(q,mult(q,b))+q$\;
 Output $c=mult(q,a)$
\end{algorithm}

\subsubsection{Renormalization}\label{normcst}

\paragraph{Problem:}
In order to compute the gradients, the parties need to use MPC and compute the terms in Eqs. \eqref{eq:mat}, \eqref{eq:dmat}, and \eqref{eq:den}. We can see that the term $\text{mat}_{n,k}$ is required to compute the denominator ($den_n$) in Eqs. \eqref{eq:gp} and \eqref{eq:gr}.

We can see in Eq. \eqref{matshare} that the value $\text{mat}_{n,k}^p$ is calculated by party $p$ by taking the exponential of the 
value $\log(\mathcal{F}_p(\bx_n^p|\brho^p_k))$. However, because of the fixed point precision, party $p$ actually has to truncate the decimal part of the scaled exponential.

\begin{equation}\label{partysend}
    mat_{n,k}^{(p)}=\text{trunc}(\exp(\log(\mathcal{F}_p(\bx_n^p|\brho^p_k)))*2^{32}).
\end{equation}

For a sufficiently small value of $\mathcal{F}_p(\bx_n^p|\brho^p_k)$, the value in \eqref{partysend} can turn out to be a zero. In such cases $\text{mat}_{n,k}=\prod_p \text{mat}_{n,k}^{(p)}=0$. This happens in the partitioned case when the log of the component density for a certain sample $n$ with respect to the features in party $p$ are very small negative numbers, \emph{i.e.}, $\log \mathcal{F}_p(\bx_n^p|\brho^p_k)<-20$. In the non-partitioned case, since their is no privacy concern, finding the gradients happens in the log-space. However, in the partitioned case, each party needs to do the exponential of this value, which will be very small that it will be truncated to zero. A bigger problem arises when this happens for every $k$ in at least one of the parties, making $\text{mat}_{n,k}=0$ for all components $k$, thus leading to $\text{den}_{n}=\sum_k\pi_k\prod_p \text{mat}_{n,k}^{(p)}=0$.

When $\text{den}_{n}=0$, the program would output a $nan$ or an $inf$. However, with the crypten arithmetics, we use the Newton Raphson method to calculate the quotient, as in Algorithm~\ref{divalg}. Therefore, in this case, the output of $div(a,b)$ where $b=0$ would be $0$. That is why the gradients in this case will end up being zero, and instead of outputting a $nan$ or $inf$, the program runs without learning.




\paragraph{Solution:}
To solve the discussed numerical issues, we will add a constant to normalize the values so they will not be truncated to $0$ when discretized. A threshold $t$ is set based on the precision and number of parties. This threshold serves to show a party if the values it outputs would likely lead the denominator to be truncated to zero.  

For each individual $n$, each party $p$ then checks:
if the value $\log(\mathcal{F}_p(\bx_n^p|\brho^p_k))<t$, the party adds a constant $c_n^{(p)}$ to it, such that $\log(\mathcal{F}_p(\bx_n^p|\brho^p_k))+c_n\geq t$. This constant will be calculated by each party as shown in Algorithm~\ref{alg:normcst}. Hence the normalizing constant from party $p$ would be an $N$-vector where $C_n^{(p)}=e^{c_n^{(p)}}$ for all $n\in[N]$. The constant will be the element-wise product of the constant from the $p$ parties such that $C=\prod_p C^{(p)}$.

For every individual record $n$, we have:

\begin{align}
    C_n\; \text{mat}_{n,k}&=\prod_p C_n^{(p)}\;\text{mat}_{n,k}^{(p)}\\
    C_n\;\text{dmat}_{n,k}^p&= \prod_p\pi_k \cdot C_n^{(p)}\text{dmat}_{n,k}^{(p)}
    \label{eq:matrices}
\end{align}

\begin{equation}C_n\;\text{den}_n=C_n\;\sum_k \pi_k \text{mat}_{n,k}.\label{eq:denom}\end{equation}

When we substitute these modified values into Eqs. \eqref{eq:gp} and \eqref{eq:gr}, we get

\begin{align*}
    GP(n,k)&=
    \frac{C_n\;\text{mat}_{n,k}}{C_n\;\text{den}_n}=
    \frac{\text{mat}_{n,k}}{\text{den}_n}\\
    GR(n,k,p)&=
    \frac{C_n\;\text{dmat}_{n,k}^p}{C_n\;\text{den}_n}=
    \frac{\text{dmat}_{n,k}^p}{\text{den}_n}.
\end{align*}

Therefore, the constant $C_n$ cancels out and we recover $GP(n,k)$ and $GR(n,k,p)$, as previously.

The added constant should take into account that we would need the product from all parties. The idea would be, most importantly, to avoid getting a zero in the denominator. Therefore, each party can check its own data and add a constant until the values are large enough that the product and sum will not be too small to be truncated by the precision. Party $p$ computes the normalizing constant $C_n^{(p)}$ as shown in Algorithm~\ref{alg:normcst}.


The reason for this is that, as per the Algorithm~\ref{algo:main}, the values from a party $p$ will be multiplied with the corresponding one from the other parties and then summed for all $k$ to get the denominator, which is required to be non-zero.

We show in Figure~\ref{custom} that when a normalization constant is not added, the program does not learn and the optimisation does not happen.

\begin{algorithm}\caption{Party $p$ compute $C_n^{(p)}$}\label{alg:normcst}
\SetKwData{Left}{left}\SetKwData{This}{this}\SetKwData{Up}{up}
\SetKwFunction{Union}{Union}\SetKwFunction{FindCompress}{FindCompress}
\SetKwInOut{Input}{Input}
\SetKwInOut{Output}{Output}
\SetAlgoLined
\Input{Threshold $t$, small constant $c=0.01$}
\Output{$C_n^{(p)}$}
\BlankLine
$C_n^{(p)}=0$\;
 \texttt{while $LF_{n,k}^p:=\max_k(\log\mathcal{F}_p(\bx_n^p|\brho^p_k))<t$}\;
  \quad $\log\mathcal{F}_p(\bx_n^p|\brho^p_k)=\log\mathcal{F}_p(\bx_n^p|\brho^p_k)+c$ \;
  \quad $C_n^{(p)}=C_n^{(p)}+c$
\end{algorithm}

\section{Privacy Theorems}\label{sec:theorems}

\subsection{Privacy with respect to outside analysts}

\begin{thm}
When there is a trusted party adding the noise, Algorithm~\ref{algo:main} is $(\epsilon,\delta)$-differentially private with respect to the outside analyst, where $\epsilon = \mathcal{O}(\delta, q, T, \mathcal{N}_\mathbb{Z}(0,\sigma^2))$ is computed using the accounting oracle in Definition~\ref{def:oracle}. Here $\mathcal{N}_\mathbb{Z}(0,\sigma^2)$ denotes the discrete Gaussian mechanism with variance $\sigma^2$.
\label{thm:ext_analyst_privacy}
\end{thm}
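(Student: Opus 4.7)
The plan is to reduce the algorithm, as seen by the outside analyst, to a standard $T$-fold composition of a subsampled discrete Gaussian mechanism, and then invoke the accounting oracle from Definition~\ref{def:oracle}. First I would characterise precisely what the outside analyst observes: since the MPC protocol is used only to aggregate the per-example gradients and add the noise, the analyst's view is limited to the sequence of released perturbed aggregate gradients $\widetilde{g}_1, \dots, \widetilde{g}_T$ (equivalently, the sequence of variational parameter iterates), plus public quantities such as the prior gradient and the log-determinant term that are added post hoc in Eqs.~\eqref{gradmu} and \eqref{gradsig}. Because the latter two depend only on $\btheta$ and not on $\dataset$, they constitute data-independent post-processing and do not affect the privacy analysis.

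Next I would analyse a single iteration $t$. The per-example gradient vector $g_t(\bx_n)$ defined in Eq.~\eqref{eq:gtn} is clipped to $\ell_2$-norm at most $C$ via Eq.~\eqref{clip}, so the function $\dataset \mapsto g_t^\ast = \sum_n g_t^\ast(\bx_n)$ has $\ell_2$-sensitivity at most $C$ with respect to change of a single record (in fact, with respect to addition or removal of one record from the subsampled minibatch). The trusted party then adds $\pmb{\zeta}\sim \mathcal{N}_{\mathbb{Z}}(0, C^2\sigma^2 \bI)$. By the privacy guarantees of the discrete Gaussian mechanism from \citet{canonne2020discrete}, this yields a base mechanism $\mathcal{A}(\sigma)$ whose per-step privacy profile matches that of the continuous Gaussian mechanism with noise scale $\sigma$ relative to sensitivity $C$.

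I would then combine the following three ingredients: (i) the Poisson subsampling step in Algorithm~\ref{algo:main} (lines drawing a random subset with ratio $q$) provides privacy amplification by sub-sampling with ratio $q$; (ii) the $T$ iterations are adaptively composed, and (iii) each composed step is exactly the base mechanism $\mathcal{A}(\sigma)$ above. Feeding these into the accounting oracle $\mathcal{O}$ of Definition~\ref{def:oracle} gives the claimed bound $\epsilon = \mathcal{O}(\delta, q, T, \mathcal{N}_{\mathbb{Z}}(0,\sigma^2))$, and the composed mechanism producing the analyst's view is $(\epsilon,\delta)$-DP. Finally, since the mapping from $(\widetilde{g}_1,\dots,\widetilde{g}_T)$ to any downstream object released to the analyst (updated parameters, synthetic samples, etc.) is a data-independent function, the overall release satisfies the same $(\epsilon,\delta)$ bound by post-processing.

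The main obstacle I expect is arguing cleanly that the MPC layer contributes nothing to the analyst's view: one must check that the only quantities ever decrypted and revealed are the noisy aggregates $\widetilde{g}_t$, so that the security guarantee of CrypTen implies the analyst sees exactly what a centralised trusted curator running the same clip-sum-perturb pipeline would output. Once this observation is stated, the result reduces to a standard DP-SGD accounting argument; all remaining steps (sensitivity bound via clipping, discrete Gaussian guarantee, subsampling amplification, composition) are then routine invocations of existing results.
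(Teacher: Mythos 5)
Your proposal is correct and follows essentially the same route as the paper: the paper's own proof is a two-line observation that Algorithm~\ref{algo:main} is a $T$-fold composition of a sub-sampled discrete Gaussian mechanism with ratio $q$ and noise variance $\sigma^2$, to which the accounting oracle of Definition~\ref{def:oracle} is applied directly. Your version simply makes explicit the supporting steps the paper leaves implicit (the analyst's view, the sensitivity bound $C$ from clipping, the reduction of the MPC layer to a trusted-curator pipeline, and post-processing), all of which are consistent with the paper's argument.
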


\begin{proof}
Algorithm 1 consists of $T$ iterations of sub-sampled discrete Gaussian mechanism with sub-sampling probability $q$ and noise variance $\sigma^2$. Given $\delta$, it is $(\epsilon, \delta)$-DP with
\[ \epsilon = \mathcal{O}(\delta, q, T, \mathcal{N}_\mathbb{Z}(0,\sigma^2)) \]
computed using an accounting oracle.
\end{proof}

\begin{thm}
When the noise is added by $P$ parties collaboratively, Algorithm~\ref{algo:main} is $(\epsilon,\delta)$-differentially private with respect to the outside analyst, where $\epsilon = \mathcal{O}(\delta, q, T, \mathcal{N}_{\mathbb{Z}/P}(0,\sigma^2))$ is computed using the accounting oracle in Definition~\ref{def:oracle}. Here $\mathcal{N}_{\mathbb{Z}/P}(0,\sigma^2))$ denotes the distributed discrete Gaussian mechanism with $P$ parties and variance $\sigma^2$.
\end{thm}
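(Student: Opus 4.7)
The plan is to mirror the proof of Theorem~\ref{thm:ext_analyst_privacy}, with the only substantive change being the replacement of the centralised discrete Gaussian mechanism by the distributed one. I would begin by observing that from the outside analyst's viewpoint the algorithm is indistinguishable in structure from the trusted-party version: each of the $T$ iterations performs Poisson sub-sampling with ratio $q$, computes a sum of clipped per-example gradients inside MPC, adds noise inside MPC, and only the decrypted perturbed gradient $\tilde g_t$ is released. Since the MPC computation reveals nothing to the analyst beyond its final output (by correctness and security of the arithmetic MPC protocol used), the release at each iteration is a measurable function of $\sum_{i\in B_t} g_t^*(\bx_i) + \bzeta$, where $\bzeta = \sum_{p=1}^P \bzeta^p$ with independent $\bzeta^p \sim \mathcal{N}_{\mathbb{Z}}(0, (\sigma^2/P)\bI)$.

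Next I would invoke the distributed discrete Gaussian result of \citet{kairouz2021distributed} (their Theorem~11 and its extension to $P$ summands) to identify $\bzeta$ with a sample from $\mathcal{N}_{\mathbb{Z}/P}(0,\sigma^2 \bI)$. Together with the $L_2$-sensitivity bound $C$ provided by the clipping step~\eqref{clip}, this means that each iteration, conditioned on the sub-sample, is a realisation of the base mechanism $\mathcal{N}_{\mathbb{Z}/P}(0,\sigma^2)$ applied to a query of sensitivity $C$. Composing across the $T$ iterations with sub-sampling probability $q$, the claim follows directly by feeding $(\delta, q, T, \mathcal{N}_{\mathbb{Z}/P}(0,\sigma^2))$ into the accounting oracle of Definition~\ref{def:oracle}, exactly as in the previous theorem.

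The step I expect to require the most care is justifying that nothing visible to the outside analyst beyond $\tilde g_t$ depends on the data: one must argue that the per-party noise shares $\bzeta^p$, the intermediate quantities $\text{mat}_{n,k}^{(p)}$, $\text{dmat}_{n,k}^{(p)}$, and the normalisation constants $C_n^{(p)}$ all remain encrypted, so the analyst's view is post-processing of $\tilde g_t$. Once this is established, the DP bound for the released iterate is inherited from the distributed discrete Gaussian mechanism, and the rest is a routine application of the accounting oracle to the $T$-fold sub-sampled composition, exactly paralleling the proof of Theorem~\ref{thm:ext_analyst_privacy}.
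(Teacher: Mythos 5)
Your proposal is correct and follows essentially the same route as the paper, which simply declares the proof ``analogous to that of Theorem~\ref{thm:ext_analyst_privacy}'': view each iteration as a sub-sampled noise-addition mechanism with the distributed discrete Gaussian $\mathcal{N}_{\mathbb{Z}/P}(0,\sigma^2)$ as the base mechanism and feed $(\delta, q, T, \mathcal{N}_{\mathbb{Z}/P}(0,\sigma^2))$ to the accounting oracle. You in fact supply more detail than the paper does --- the MPC-security argument that the analyst's view is post-processing of $\tilde g_t$, the appeal to Theorem~11 of \citet{kairouz2021distributed} for the sum of noise shares, and the sensitivity bound from clipping --- all of which the paper leaves implicit.
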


The proof is analogous to that of Theorem \ref{thm:ext_analyst_privacy}.

\subsection{Privacy with respect to data holding parties}

We now turn to privacy of the algorithm relative to data holders. These parties know their own share of the data, so VPDP provides the relevant privacy model. A critical question lies with privacy amplification from sub-sampling, which requires that the elements selected for a batch need to be kept secret for the amplification to apply. 

\begin{thm}\label{thm:vp_third_party}
When a trusted third party is adding the noise, Algorithm~\ref{algo:main} is $(\epsilon,\delta)$-VPDP with respect to the $P$ parties, where $\epsilon=\mathcal{O}\left(\delta, Q, T, \mathcal{N}_\mathbb{Z}(0,\sigma^2)\right)$ is computed using the accounting oracle, and $Q=q$ if the corresponding party cannot access the sub-sample indices while $Q=1$ if they are known.
\end{thm}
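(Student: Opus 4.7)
The plan is to reduce the VPDP claim for each data-holding party to a standard DP analysis of a sub-sampled discrete Gaussian mechanism, where the only non-routine part is bookkeeping what each party learns beyond the released output. By Theorem~\ref{thm:vpdp}, any $(\epsilon,\delta)$-DP mechanism on the joint data is already $(\epsilon,\delta)$-VPDP, so it suffices to argue that, from the vantage point of any party $p$, the view of Algorithm~\ref{algo:main} is a $T$-fold composition of a sub-sampled discrete Gaussian mechanism with sub-sampling parameter $Q$ and noise variance $\sigma^2$. The main obstacle is justifying that nothing leaks beyond (i) the released $\tilde g_t$ at each iteration and (ii) possibly the batch indices, so that the oracle bound $\mathcal{O}(\delta, Q, T, \mathcal{N}_\mathbb{Z}(0,\sigma^2))$ applies cleanly.

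First I would fix a party $p$ and a pair of VP-adjacent datasets $\mathcal{D}, \mathcal{D}'$ differing at index $k$, where by definition $x_k^p = {x_k^p}'$ while $x_k^i \neq {x_k^i}'$ for some $i\neq p$. From party $p$'s perspective, everything it contributes locally is identical under $\mathcal{D}$ and $\mathcal{D}'$; the only way its view of the protocol can depend on the differing coordinates is through quantities routed through MPC. By the security guarantees of CrypTen (taken as a black box), the party sees only the eventually decrypted output $\tilde g_t$ at each iteration, together with whatever side information about batch indices and public constants the protocol discloses (the $C_n^{(p)}$ are computed locally by each party and never transmitted in unencrypted form). The noise $\boldsymbol{\zeta}\sim\mathcal{N}_\mathbb{Z}(0,C^2\sigma^2\mathbf{I})$ is supplied by the trusted third party, so its distribution is exactly that of the centralized discrete Gaussian mechanism.

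Second I would formalise the per-iteration mechanism. After clipping, $g_t^\ast = \sum_n g_t^\ast(\bx_n)$ has $\ell_2$-sensitivity at most $C$ under any single-record change, including any VP-adjacent change, and $\tilde g_t = g_t^\ast + \boldsymbol{\zeta}$ is therefore an application of the discrete Gaussian mechanism with noise scale $C\sigma$. This would be combined with sub-sampling amplification: if the indices in the random subset $n$ are hidden from party $p$ (e.g., generated by the aggregator and passed into MPC without disclosure), then the standard amplification-by-sub-sampling argument applies verbatim from party $p$'s view, yielding a $q$-sub-sampled discrete Gaussian mechanism, so $Q=q$. If instead party $p$ is told which indices participate, then the event ``the differing record $k$ is in the batch'' is known to party $p$, amplification no longer applies, and the per-iteration mechanism is simply the (un-sub-sampled) discrete Gaussian, giving $Q=1$.

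Third I would compose $T$ iterations by invoking the accounting oracle of Definition~\ref{def:oracle} on the base mechanism $\mathcal{N}_\mathbb{Z}(0,\sigma^2)$ with sub-sampling ratio $Q$, obtaining $\epsilon = \mathcal{O}(\delta, Q, T, \mathcal{N}_\mathbb{Z}(0,\sigma^2))$ as claimed. Strictly, this gives $(\epsilon,\delta)$-DP of the entire transcript seen by party $p$ under a VP-adjacent change, which by Theorem~\ref{thm:vpdp} is $(\epsilon,\delta)$-VPDP. The delicate step that I expect to need the most care is the argument that batch-index disclosure is the \emph{only} way amplification can be lost; in particular one must check that the normalisation constants $C_n^{(p)}$, while adaptive to local data, do not leak the presence or absence of the differing record $k$ to party $p$, since $p$ computes them from its own features only, which coincide under VP-adjacency.
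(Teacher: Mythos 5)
Your proposal is correct and follows essentially the same route as the paper's own proof: reduce VPDP to standard DP via Theorem~\ref{thm:vpdp}, treat each iteration from party $p$'s view as a sub-sampled discrete Gaussian mechanism with ratio $Q$ (where $Q=1$ when batch indices are known, since amplification is lost), and compose via the accounting oracle. Your version is in fact more careful than the paper's terse argument --- in particular the explicit check that the locally computed normalisation constants and the MPC transcript leak nothing beyond $\tilde g_t$ and the batch indices is a worthwhile addition, not a deviation.
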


\begin{proof}
Every party $p, p=1,\cdots,P$ has a subset of the sample features.
Following Theorem~\ref{thm:vpdp} and Theorem \ref{thm:ext_analyst_privacy}, we see that Algorithm~\ref{algo:main} is $(\epsilon,\delta)$-VPDP, with respect to the parties.

The noise mechanism with respect to any party $p$ is  $\mathcal{A}=\mathcal{N}_\mathbb{Z}(0,\sigma^2)$.
Therefore, $\epsilon$ is computed using the accounting oracle, such that \[\epsilon=\mathcal{O}\left(\delta, 
Q, T, \mathcal{N}_\mathbb{Z}(0,\sigma^2)\right).\]

\end{proof}

This privacy bound for the case of known sub-sample indices with $Q=1$ is highly pessimistic, as it is unlikely that any sample would actually contribute to all $T$ iterations. More effective bounds in this case could be derived by foregoing random sampling of minibatches altogether and simply dividing the data set to $S$ disjoint minibatches. Running the algorithm for $E$ epochs would yield a privacy bound of
\[\epsilon=\mathcal{O}\left(\delta, 1, E, \mathcal{N}_\mathbb{Z}(0,\sigma^2)\right),\]
which would significantly improve upon Theorem \ref{thm:vp_third_party} as $E = \frac{T}{S} \approx qT$.

\begin{thm}\label{thm:vp}
When the parties add the noise collaboratively, Algorithm~\ref{algo:main} is $(\epsilon,\delta)$-VPDP with respect to the $P$ parties, where $\epsilon=\mathcal{O}\left(\delta, Q, T, \mathcal{N}_{\mathbb{Z}/P}\left(0,\frac{(P-1)}{P}\sigma^2\right)\right)$ is computed using the accounting oracle, and $Q=q$ if the corresponding party cannot access the sub-sample indices while $Q=1$ if they are known.
\end{thm}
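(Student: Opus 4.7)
The plan is to adapt the argument of Theorem~\ref{thm:vp_third_party} to the collaborative noise setting, the key new ingredient being that a participating party knows its own noise share and can subtract it off. Fix an arbitrary party $p$ and consider the view of party $p$ with respect to a pair of VP-adjacent datasets $\dataset, \dataset'$. By VP-adjacency, $\dataset$ and $\dataset'$ agree on the features held by party $p$, so the portion of the per-example gradient contributed locally by party $p$ is identical under both datasets. All data-dependent differences between $\dataset$ and $\dataset'$ enter only through the contributions of the other $P-1$ parties, which are combined with party $p$'s share under MPC and finally perturbed by $\pmb{\zeta}=\sum_{p'=1}^{P}\pmb{\zeta}^{p'}$ with each $\pmb{\zeta}^{p'}\sim\mathcal{N}_{\mathbb{Z}}(0, C^2\sigma^2/P\,\bI)$.

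Next I would observe that party $p$ knows its own share $\pmb{\zeta}^p$ deterministically (it sampled it itself). Hence the randomness that actually conceals the VP-adjacent difference from party $p$'s viewpoint is the residual noise $\pmb{\zeta}^{(-p)}:=\sum_{p'\neq p}\pmb{\zeta}^{p'}$, which is a sum of $P-1$ independent discrete Gaussians each with variance $C^2\sigma^2/P$. This is exactly the distributed discrete Gaussian $\mathcal{N}_{\mathbb{Z}/P}(0,\tfrac{P-1}{P}C^2\sigma^2\bI)$ with $P-1$ parts, whose DP properties are provided by the distributed discrete Gaussian analysis cited from \citet{kairouz2021distributed}. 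I would then argue that, in terms of distinguishing $\dataset$ from $\dataset'$, each iteration of Algorithm~\ref{algo:main} as seen by party $p$ is equivalent to releasing the sub-sampled sum of the remaining parties' clipped per-example gradient contributions perturbed by $\pmb{\zeta}^{(-p)}$; by Theorem~\ref{thm:vpdp} the ordinary $(\epsilon,\delta)$-DP guarantee of this mechanism transfers to an $(\epsilon,\delta)$-VPDP guarantee relative to party $p$.

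From here the composition is standard: Algorithm~\ref{algo:main} consists of $T$ iterations of this sub-sampled discrete Gaussian mechanism, so the accounting oracle of Definition~\ref{def:oracle} yields
\[
\epsilon \;=\; \mathcal{O}\!\left(\delta,\, Q,\, T,\, \mathcal{N}_{\mathbb{Z}/P}\!\left(0,\tfrac{P-1}{P}\sigma^2\right)\right),
\]
where the sub-sampling rate $Q$ depends on whether party $p$ has access to the minibatch indices. If the coordinator keeps the sub-sample secret from party $p$, then privacy amplification by sub-sampling applies and $Q=q$; if party $p$ learns which indices are included in each minibatch, the amplification argument collapses and we must take $Q=1$, exactly as in the proof of Theorem~\ref{thm:vp_third_party}. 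Since the choice of party $p$ was arbitrary and the protocol is symmetric across parties, the same bound holds with respect to every party.

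The main obstacle I anticipate is the rigorous justification that conditioning on party $p$'s own noise share really does reduce the effective noise distribution to the claimed $(P-1)$-part distributed discrete Gaussian, rather than leaving residual leakage through the MPC transcript. Because the MPC protocol is assumed secure in the honest-but-curious model, the only information party $p$ obtains beyond its own inputs is the final revealed gradient $\tilde{g}_t$; conditional on $\pmb{\zeta}^p$, this equals the clipped gradient sum plus a sample from $\mathcal{N}_{\mathbb{Z}/P}(0,\tfrac{P-1}{P}C^2\sigma^2\bI)$, so the reduction goes through. The remainder is then a direct invocation of Theorem~\ref{thm:vpdp} followed by accounting-oracle composition, mirroring Theorem~\ref{thm:vp_third_party}.
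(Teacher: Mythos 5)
Your proposal is correct and follows essentially the same route as the paper's proof: the key step in both is that a party knows its own noise share, so the effective concealing noise relative to that party is the $(P-1)$-part distributed discrete Gaussian with variance $\tfrac{P-1}{P}\sigma^2$, after which Theorem~\ref{thm:vpdp} and the accounting oracle give the bound exactly as in Theorem~\ref{thm:vp_third_party}. You simply spell out details the paper leaves implicit (the role of VP-adjacency and the reduction of the MPC view to the revealed gradient), which strengthens rather than changes the argument.
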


\begin{proof}
Analogously to the proof of Theorem~\ref{thm:vp_third_party}, we can prove that Algorithm~\ref{algo:main} is $(\epsilon,\delta)$-VPDP with respect to the $P$ parties.
The noise mechanism with respect to a party $p$ is $\mathcal{A}=\mathcal{N}_{\mathbb{Z}/P}\left(0,\frac{(P-1)}{P}\sigma^2\right)$, as each party knows a portion of the added noise mechanism in Algorithm~\ref{algo:main}.
Therefore, $\epsilon$ is computed using the accounting oracle, such that \[\epsilon=\mathcal{O}\left(\delta, 
Q, T, \mathcal{N}_{\mathbb{Z}/P}\left(0,\frac{(P-1)}{P}\sigma^2\right)\right).\]

\end{proof}

\section{Experiments}\label{sec:exp}




\subsection{The Adult Dataset}\label{sec:data}

Following \citet{jalko2021privacy}, we tested our model on the Adult dataset from the UCI machine learning repository \cite{Dua:2019}. In order to simulate the vertical partitioning, we divided the data between two parties where one party holds the demographic information of the individuals, and the other holds the financial information of the same individuals. We then compared the results with those of running the DPVI algorithm on the non-partitioned Adult data. 
For both cases, we model the data similarly as in \cite{jalko2021privacy}, where the mixture components are products over features modelled as follows:
\begin{itemize}
    \item We model real-valued, continuous variables, such as the age, using a Beta distribution after normalising them into $[0,1]$.
    \item For real-valued non-continuous variables, we choose to discretize  the values and model them as categorical variables.
    \item For discrete random variables with one of several options, we also use the categorical distribution.
\end{itemize}

We consider $K=20$ mixture components. The data were divided into training and testing data, where the training data constitutes $30162$ data points and the testing data $15060$ data points.
The DP noise added is chosen with zero mean and standard deviation $\sigma=2.042$ resulting to $(\epsilon,\delta)$-DP with $\epsilon=1$ and $\delta=10^{-5}$ with respect to an outside analyst. 

This also results in an $(\epsilon',\delta)$-VPDP with respect to a party $p$, $p\in[1,2]$. With subsampling ratio=1, the accounting oracle can give an upper bound and a lower bound on the values $(\epsilon',\delta)$, as in this case, the number of iterations $T$ is bounded by the number of epochs and the number of iterations. Therefore, with $\delta=10^{-5}$, $26\leq \epsilon'\leq 3000$. It can be seen here, that if DP between the parties is important, it might be best to not sub-sample, as the parties do not gain from the sub-sampling noise amplification.

For setting the priors for the model parameters, we first divide the $\brho_k$ variables into two parts. We denote the parameters for the Beta distributed features with $\tau_k$, and the parameters for the Categorical with $\omega_k$. We set the priors for $\pmb{\pi}$ and $\brho_k$ as:
\begin{align*}
    \pmb{\pi} &\sim \text{Dir}(\alpha)\\
    \omega_k &\sim \text{Dir}(\beta)\\
    \tau_k &\sim \text{Gamma}(1,1)
\end{align*}
with $\alpha = \boldsymbol{1}$ and $\beta = \boldsymbol{1}$ are vectors of all ones.

\subsection{Simulating the algorithm without Crypten}

Running the experiments with Crypten proved to be very time consuming. Therefore, we modeled Algorithm~\ref{algo:main} by using a custom fixed point implementation using the same techniques as those used in Algorithms~\ref{multalg} and \ref{divalg} but without encryption. To compare the two implementations, we assume 2 parties, each holding one feature from the partitioned described in Section~\ref{sec:data}. We then run the algorithm for 3000 iterations and batch size 100 (\emph{i.e.,} 33 epochs). We also run the same experiment without using the normalizing constant discussed in Section~\ref{normcst}.

We can see from Figure~\ref{custom} that the output from algorithm run with Crypten gave exactly the same results as without it, while the run without using the normalizing constant did not learn at all. 

Using the custom fixed point arithmetic without Crypten offers no MPC security guarantees. However, the run-time for a single iteration with Crypten was around $4.3$ seconds while that without it was around $0.09$ seconds, \emph{i.e.}, around 48 times higher run-time. Therefore, in the interest of saving on computation time, in our next experiments, we use custom fixed point implementation to show the accuracy of the results.


\begin{figure}
\centering
\includegraphics[scale=0.55]{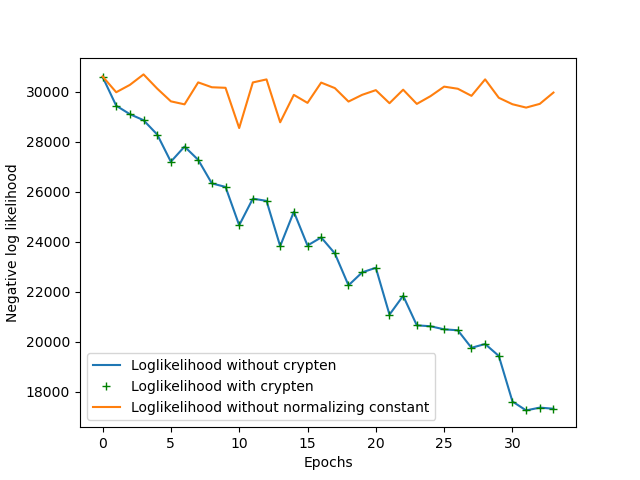}
\caption{The model learns equally well (error measured as negative log likelihood decreases) with and without the MPC (Crypten). It can be seen that without adding the normalizing constant, the model does not learn, as discussed in Section~\ref{normcst}.}
\label{custom}
\end{figure}


\subsection{Evaluation of Algorithm~\ref{algo:main}}

We run the DPVI algorithm used in \cite{JalkoHD17} on the non-partitioned Adult dataset and Algorithm~\ref{algo:main} on the vertically partitioned Adult data to assess the effect of fixed point arithmetics to the performance of our inference algorithm.

We can expect some differences between the results from the vertically partitioned DPVI-VPD and the non-partitioned DPVI due to fundamentally different numerical representation used in the algorithm (fixed point in DPVI-VPD vs.\ floating point in the non-partitioned DPVI). Due to those differences, demonstrating exact equality between the algorithms is not possible, and thus we simply test if the two algorithms converge to equally good local optima.

We run ten repeats of Algorithm~\ref{algo:main} with $T=20000$ with minibatches of $100$ data points (i.e., the sampling ratio $q\approx 0.003$). We use the test set negative log-likelihood to measure similarity between the results from the DPVI algorithm with and without vertical partitioning. The algorithm on partitioned data took $\approx 2$-times longer than the algorithm on non-partitioned data (0.4 seconds vs. 0.2 seconds per iteration).  Figure~\ref{fig:nonpart} shows the negative log-likelihood with the number of epochs evaluated on the test data. We can see that the results are very similar, however, of course, using MPC will increase the run time, and due to fixed point arithmetic, some inaccuracies might arise.

\begin{figure}
    \centering
    \includegraphics[scale=0.55]{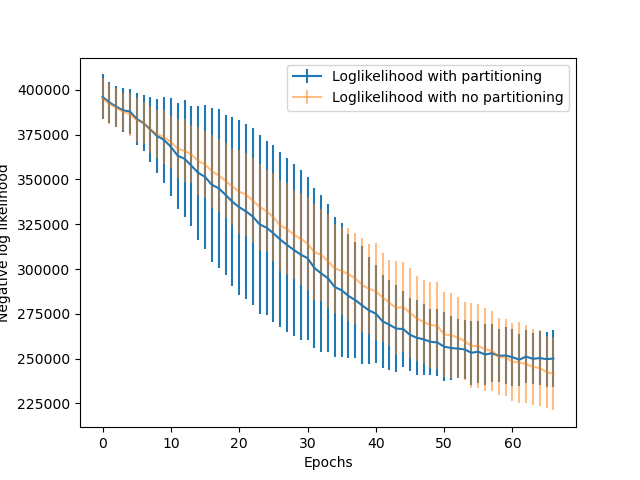}
    \caption{The model learns comparably well (error measured as test-set negative log-likelihood decreases) for DPVI on non-partitioned and vertically partitioned Adult
    data with 10 repetitions of the algorithm run on 20000 iterations, batch size = 100, and clipping threshold = 1.0. The small differences in the results are due to the discretization for MPC in the partitioned case, while using floating point arithmetics in the non-partitioned case.}
    \label{fig:nonpart}
\end{figure}

\section{Discussion}\label{sec:disc}

In this paper, we demonstrate the practical feasibility of building privacy-preserving approximate Bayesian inference on vertically partitioned data. We show results on a mixture model with mixture components that factorise across parties. This work easily generalizes to any probabilistic model where the likelihood has no dependence between the different parties.

We can also see that even if some $c\leq P-1$ parties collude to learn more about the data held by the other parties, the algorithm will still guarantee vertically partitioned DP with respect to the parties, however, the collaborating parties now know more about the data. Therefore, with $c\leq P-1$ colluding parties, we can see that Algorithm \ref{algo:main} is $(\epsilon,\delta)$-VPDP with respect to the $P$ parties, such that $\epsilon=\mathcal{O}\left(\delta, Q, T, \mathcal{N}_{\mathbb{Z}/P}\left(0,\frac{(P-c)}{P}\sigma^2\right)\right)$ with respect to the $c$ colluding parties, and $\epsilon=\mathcal{O}\left(\delta, Q, T, \mathcal{N}_{\mathbb{Z}/P}\left(0,\frac{(P-1)}{P}\sigma^2\right)\right)$ with respect to the non-colluding parties.

One important thing to note is that, if the privacy with respect to the parties is important, sub-sampling might not be a good idea or would need to be kept hidden, since otherwise there is no gain from the sub-sampling amplification.

In this work, we assume the data are matched between the different parties. This is possible if the data contains some identifiers (e.g.\ a social security number, email address or some other unique identifier) that can be used for matching, although care must be taken to restrict to the matched individuals. Assuming useful identifiers exist, one good option for matching would involve using private set intersection \citep{decristofaro2010practical} for finding the matching records and then sorting them by the identifier to establish a match. This would not reveal any additional information beyond whether a certain individual who exists in the database of one party exists in the database of the other.

If there is no single identifier, the record matching problem becomes more difficult. There exist algorithms for private entity resolution using non-unique identifying information such as names, dates of birth, etc. \citep{getoor2012entity,sehili2015privacy}. Using such algorithms increases the possibility of matching errors, which could degrade the performance of the learning. Incorporating such approach into the learning procedure might open interesting possibilities, such as using the data model to help disambiguate uncertain matches.

An obvious next question is what other models could be learned efficiently under this framework. Assuming we are willing to accept slightly larger computational cost, it might be possible to consider models requiring more communication between the parties.

\section{Conclusion}\label{sec:conc}

We have studied the problem of training a mixture model over vertically partitioned data under DP. Using differential privacy with multiparty computation proved to be non-trivial. Some extra measures were needed in order to train the DP model correctly with vertically partitioned data under MPC.

\bibliography{references}
\end{document}